\newtheorem{theorem}{Theorem}
\newtheorem{lemma}{Lemma}
\newtheorem{observation}{Observation}
\title{\LARGE \bf
Autonomous Navigation in Ice-Covered Waters with Learned Predictions on Ship-Ice Interactions}
\author{Ninghan Zhong$^{1}$, Alessandro Potenza$^{2}$, Stephen L. Smith$^{1}$
\thanks{This work is supported by the National Research Council Canada (NRC).} 
\thanks{$^{1}$Department of Electrical and Computer Engineering, University of Waterloo, Waterloo, ON N2L 3G1, Canada (e-mail: \protect\url{{n5zhong, stephen.smith}@uwaterloo.ca})}
\thanks{$^{2}$Department of Electrical and Computer Engineering, University of Manitoba, Winnipeg, MB R3T 2N2, Canada (e-mail: \protect\url{potenzaa@myumanitoba.ca}).  This work was performed while Alessandro Potenza was visiting the University of Waterloo.}
}
\begin{document}

\maketitle
\thispagestyle{empty}
\pagestyle{empty}

\begin{abstract}
\label{sec:abstract}
Autonomous navigation in ice-covered waters poses significant challenges due to the frequent lack of viable collision-free trajectories. When complete obstacle avoidance is infeasible, it becomes imperative for the navigation strategy to minimize collisions. Additionally, the dynamic nature of ice, which moves in response to ship maneuvers, complicates the path planning process. To address these challenges, we propose a novel deep learning model to estimate the coarse dynamics of ice movements triggered by ship actions through occupancy estimation. To ensure real-time applicability, we propose a novel approach that caches intermediate prediction results and seamlessly integrates the predictive model into a graph search planner. We evaluate the proposed planner both in simulation and in a physical testbed against existing approaches and show that our planner significantly reduces collisions with ice when compared to the state-of-the-art. Codes and demos of this work are available at {\footnotesize \url{https://github.com/IvanIZ/predictive-asv-planner}}.
\end{abstract}
\section{Introduction}
\label{sec:introduction}
In recent years, the polar regions have been attracting international attention. The Arctic area offers shorter shipping routes and rich natural resources while tourism in Antarctica is on the rise~\cite{sea_ice_warning}. However, navigating in ice-covered waters such as the Arctic areas poses high risks due to significantly higher ice concentrations compared to typical maritime environments~\cite{analysis_ship_operation}. 
Recent advancement of autonomous surface vehicles (ASVs) holds promise for safer and more efficient navigation in these icy waters. Nevertheless, current approaches still face substantial challenges. The high concentration of ice often makes a collision-free path infeasible. Additionally, the dynamic and chaotic nature of ice movements in response to ship maneuvers adds another layer of complexity to the task~\cite{rod2023ICRA}.

In this paper, we address the problem of path planning for an autonomous surface vehicle in ice-covered waters. 
We assume the ASV is designed for standard maritime operations and has limited ship-ice collision protection~\cite{rod2023ICRA}. 
Our work aims to compute a reference path for the ASV such that both ship-ice collisions and traveling distance are minimized. 

Existing works on ASV path planning primarily focused on finding a collision-free path~\cite{optimization_based_planner, RRT_COLREGS, velocity_obstacle}, and are not easily generalizable to environments with high ice concentrations, where collision-free paths are typically non-existent, as illustrated in Fig.~\ref{fig:sim_fig_front}. While the works in~\cite{rod2023ICRA} and~\cite{skeleton_planner} address this challenge, they do not consider ice motion during planning.
As ice concentration increases, the motion of the ice becomes more complex due to interactions with the ship and other ice floes, resulting in a more dynamic environment. Consequently, planners that treat ice floes as static objects become less effective. For example, the performance improvement seen with the planner proposed in~\cite{rod2023ICRA} diminishes with 50\% ice concentration.
In contrast, our approach incorporates predicted ice motion into the path planning process, ensuring robustness in both high and low ice concentration environments.

\setlength{\belowcaptionskip}{-5pt}
\begin{figure}[t]
    \centering
    \includegraphics[width=\linewidth]{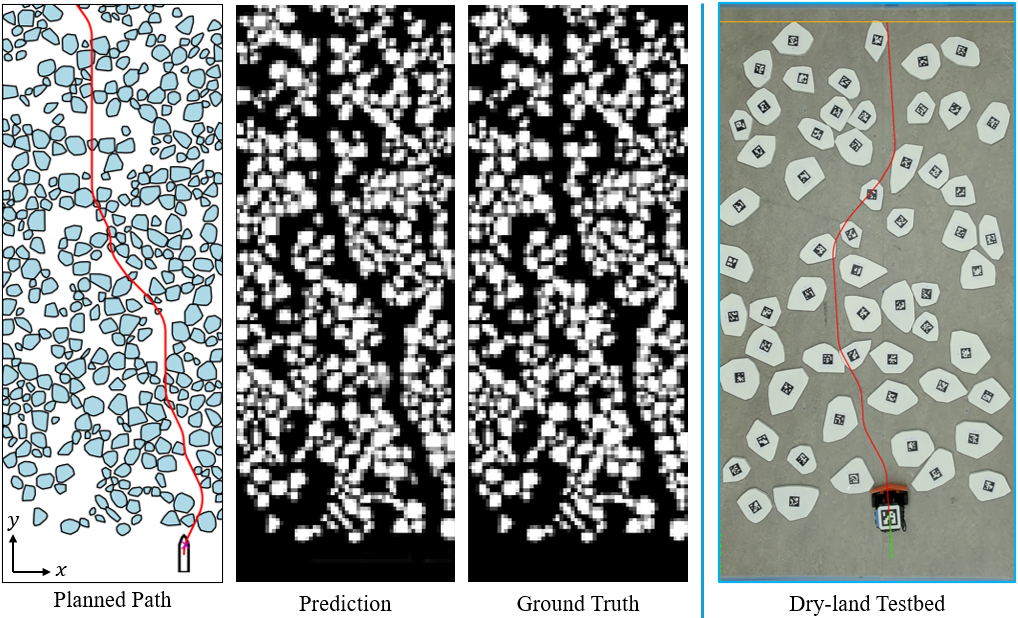}
    \caption{The left panel shows a planned path from the proposed planner in simulation. The middle panels compare the predicted occupancy after path traversal with the ground truth occupancy. The right panel presents an experimental trial from the ship-ice navigation testbed at the Autonomous Systems Laboratory, University of Waterloo.}
    \label{fig:sim_fig_front}
\end{figure}
\setlength{\belowcaptionskip}{1ex}

\paragraph{Contributions} In this paper, we propose a deep learning framework that predicts the coarse dynamics of the ice motion in response to the ship actions through occupancy estimation. The learning process is guided with a novel physics-derived loss function tailored to our occupancy formulation. 
To leverage the prediction results for planning, we present a simple yet empirically effective cost function based on occupancy maps to penalizes collisions. Further, we propose a graph search planner that seamlessly incorporates the learned model by caching the intermediate prediction results. Finally, the proposed planner is evaluated both in simulation and in a physical testbed, and superior performance is demonstrated. 

\paragraph{Related Work} 
%
While path planning for ASV ice navigation has been extensively studied, much of the existing literature focuses on computing a path on a global scope~\cite{arctic_planning_uncertain_model, finding_save_efficient_routs, auto_passage_planning_polar}. For instance, the planned paths in~\cite{arctic_planning_uncertain_model} and~\cite{finding_save_efficient_routs} span hundreds of kilometers while the paths considered in~\cite{auto_passage_planning_polar} take days to traverse. While these works demonstrated good performance, local planners are required to achieve full ship autonomy.  In ASV local planning, however, most work aims to compute a collision-free path~\cite{optimization_based_planner, RRT_COLREGS, velocity_obstacle, ice_nav_radar_image}, which does not generalize to environments where a collision-free path does not exist. 

Perhaps the most comparable works to this paper are~\cite{skeleton_planner} and~\cite{rod2023ICRA}. In~\cite{skeleton_planner}, the authors leverage morphological skeletons to represent the open-water areas given an overhead image of the ice field. An A* algorithm is then applied to find a path in the resulting graph. In~\cite{rod2023ICRA}, a lattice-based planner with a kinetic-energy based cost function is proposed to generate paths with minimal collisions. While \cite{skeleton_planner} and \cite{rod2023ICRA} apply to our settings, the authors treat obstacles as static during planning.

\emph{Ice Prediction Models:} Ice motion predictions, together with ice-ice and ship-ice interactions, have been primarily studied with numerical methods and empirical formulas~\cite{predict_CFD_DEM, predict_wave_induced, predict_circular_floe, predict_empirical_equation}. While these approaches demonstrate high accuracy, numerical simulations are computationally intensive, limiting their real-time applicability. On the other hand, empirical formulas are computationally efficient, but  often require environmental data that is not available online from ASV onboard sensors.

More recent works leverage machine learning, but the majority of these approaches have considered ice motion prediction on a global scale for sea ice forecasting. Using satellite images, convolutional LSTM neural networks have been used to predict ice motion for several days in the future~\cite{predict_convLSTM, predict_attention_convLSTM}. In~\cite{predict_ML}, linear regression and CNN models are used to perform present-day Arctic sea ice motion prediction.

Few works address local ice floe motion with learning approaches. In~\cite{predict_river_ice}, a multi-step deep-learning based perception pipeline is proposed to track the motion and velocity of river ice, but no prediction is performed. In~\cite{predict_PINN_wave}, a physics-informed neural network incorporating attention mechanisms is proposed to predict ice floe longitudinal motion in response to waves. While this work addresses ice motion in the scale of our setting, it is not applicable to ASV planning as ship maneuvers are not considered. 
To the best of our knowledge, our work provides the first ice motion prediction model designed for real-time ASV local navigation in icy conditions. 
\section{Problem Formulation}
\label{sec:problem_formulation}
In this section we introduce the problem of ship navigation through an ice-covered waters with minimal collisions.

\subsection{Environment}
\label{sec:planning_formulation}

We define the water surface in which the ship navigates as a 2D surface $\mathcal{W} \subseteq \mathbb{R}^2$. We further restrict our navigation environment as a rectangular channel $\mathcal{C} \subset \mathcal{W}$ where the length is parallel to the y-axis, as shown in Fig.~\ref{fig:sim_fig_front} (left). Note that this restriction does not limit the applicability of planning, as longer and curved channels can be partitioned into rectangular sections \cite{rod2023ICRA}. The objective is to navigate the ship to pass a goal line $\mathcal{G} \subset \mathcal{C}$ with a constant y-value. 
We assume all ice pieces can be detected by some perception setup at any given time and we treat each ice piece as an obstacle. We group the obstacles into a set $W_{obs} = \{p_1, \ldots, p_m\}$ with $p \in W_{obs}$ denoting an individual obstacle and $m$ the total number of detected ice pieces. 

\subsection{Ice Motion Prediction}
\label{sec:occ_formulation}

To address the dynamic nature of ice floes in navigation, our objective is to predict ice field evolution as a result of ship maneuvers.
Given the obstacles $\mathcal{W}_{\text{obs}}$, we can compute a representation, denoted as $O$, that captures the state of the entire obstacle field. In this work, we use an occupancy map, but other representations could be used. 
Let $\mathcal{O}$, $\mathcal{S}$, and $\mathcal{A}$ be the state space of the obstacle field, ship state space, and ship action space, respectively, the prediction task is formulated as finding a function $f: \mathcal{O} \times \mathcal{S} \times \mathcal{A} \rightarrow \mathcal{O}$. Specifically, 

\begin{equation}
\label{eq:predict_generic}
    f(O, s, a) = O'
\end{equation}
takes as inputs the current state of the obstacle field $O \in \mathcal{O}$, the state of the ship $s \in \mathcal{S}$, and a ship action $a \in \mathcal{A}$ to predict the resulting state of the obstacle field $O' \in \mathcal{O}$ after the ship action is taken. In Sec.~\ref{sec:proposed_occ_prediction}, we detail the representation used for each of the arguments.

Note that the input $O$ and the prediction $O'$ being in the same space $\mathcal{O}$ allows making predictions on a sequence of actions. 
Specifically, given the current obstacle field state $O_1$ and a paired sequence of $n$ ship actions and ship state transitions $\{(s_1, a_1), (s_2, a_2), \ldots, (s_n, a_n)\}$, 
the resulting obstacle field state $O_n'$ after taking the $n$ actions can be predicted as 
\begin{equation}
\label{eq:sequential_generic}
    f(\ldots f(f(O_1, s_1, a_1), s_2, a_2) \ldots, s_n, a_n) = O_n'.
\end{equation}

We assume that the primary driving influence on ice motion in static waters is ship maneuvers. However, we acknowledge that other factors, such as wind and ship waves, also affect ice motion. The inclusion of these additional factors is left for future work.

\subsection{Path Planning}
Given a reference path $\Pi$ and let $d(\Pi)$ be the total path length of $\Pi$, we compute a cost function $u$ that penalizes both path length and collision as 
\begin{equation}
\label{eq:path_cost_generic}
    u(\Pi) = d(\Pi) + \alpha \cdot C(\Pi)
\end{equation}
where $C(\Pi) \geq 0$ is the collision cost function described in Section~\ref{sec:occ_diff_cost}, and $\alpha \geq 0$ is a tunable scaling parameter. Given a ship start position, a goal line $\mathcal{G}$, and a set of detected obstacles $\mathcal{W}_{obs}$, we seek a reference path $\Pi$ from start to $\mathcal{G}$ that minimizes $u$. 

\section{Ice Floe Motion Prediction}
In this section, we present our proposed deep learning pipeline to predict ice motions for ship planning. 
\label{sec:proposed_occ_prediction}

\subsection{Space Representations}
To account for the complex motions of ice floes, we leverage occupancy as a coarse representation of obstacle positions and orientations, shown in Fig.~\ref{fig:space_rep}a. The channel $\mathcal{C}$ is first discretized into a grid map $M$. Given the set of detected obstacles $W_{obs}$, we compute a global occupancy map $O_{\text{global}}$ of the same resolution as $M$ where each occupancy grid $O_{\text{global}}[x, y] \in [0, 1]$ represents the ratio between the obstacle-occupied area and the grid area at grid $M[x, y]$.  Note that our occupancy map differs from the typical form in the autonomous navigation literature, where occupancy maps are usually encoded as binary maps where each grid is either occupied or empty. By leveraging occupancy ratio, this coarse obstacle representation preserves the boundary and separation information of the obstacles. 

The state of the ship $s$ in Eq.~\ref{eq:predict_generic} is represented as the ship footprint to encode spatial features such as size and shape, shown in Fig.~\ref{fig:space_rep}c. Given the current pose $\eta$ of the ship, we define footprint $\mathcal{F(\eta)}$ to be the set of grid cells occupied by the ship body in map $M$.

We adopt the control set from~\cite{rod2023ICRA} as the action set of the ship, which is a set of motion primitives where each motion primitive is a short feasible path of the ship. Given a control set $\mathcal{P}$, the ship action $a$ in Eq.~\ref{eq:predict_generic} is effectively a motion primitive $\pi \in \mathcal{P}$, and is represented as the swath of the motion primitive. Specifically, we define the swath $\mathcal{T}(\pi)$ as the set of grid cells in $M$ that are swept by the ship footprint after executing the motion primitive $\pi$, as shown in Fig.~\ref{fig:space_rep}d. 

Note that the global occupancy map $O_{\text{global}}$, footprint $\mathcal{F}(\eta)$, and swath $\mathcal{T}(\pi)$ are all computed with the same resolution as the grid map $M$ such that the spatial relations between the obstacles, ship pose, and motion primitive are preserved in the representations. Further, given a control set $\mathcal{P}$, a path can be constructed by concatenating a sequence of motion primitives. The resulting occupancy after the entire path is traversed can then be predicted in a sequential manner, as outlined in Eq.~\ref{eq:sequential_generic}. An illustration of such sequential predictions for a concatenated path is presented in Fig.~\ref{fig:sim_fig_front} middle panels.

\begin{figure}[tbp]
    \centering
    \includegraphics[width=0.95\linewidth]{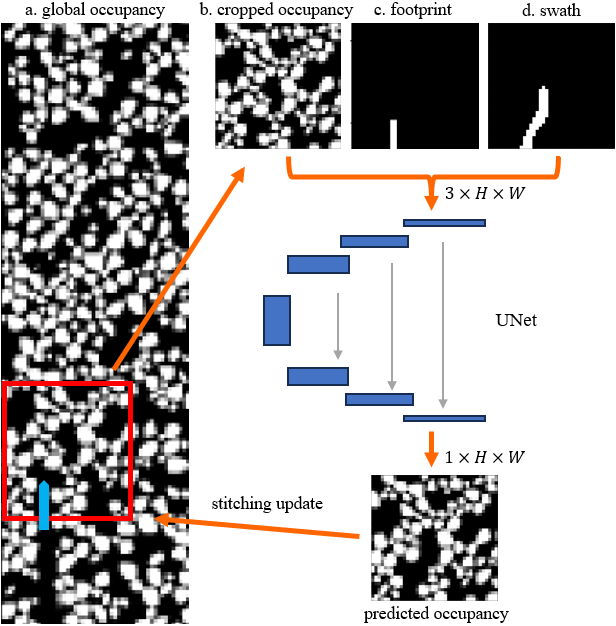}
    \caption{Ratio-based occupancy map is computed from the detected obstacles (a). Based on the ship's current pose (blue ship icon), a local occupancy observation (b) is cropped. Ship pose is encoded as footprint (c) and ship action is encoded as swath (d). U-Net is trained to predict the resulting occupancy. Predicted occupancy is stitched back into global occupancy.}
    \label{fig:space_rep}
    \vspace{-10pt}
\end{figure}

\subsection{Occupancy Prediction}

Observe that (Fig.~\ref{fig:space_rep}d and Fig.~\ref{fig:occ_diff_toy}) each motion primitive within the control set is a short path relatively local to the ship position with respect to the entire ice field. Hence a motion primitive is likely to only trigger ice motions within a local neighborhood. With this observation, we perform predictions within a locally cropped region $O \subseteq O_{\text{global}}$ with a fixed window size $H \times W$ from the global occupancy map based on the current ship position, as illustrated in Fig.~\ref{fig:space_rep}b. The footprint $\mathcal{F}(\eta)$ and swath $\mathcal{T}(\pi)$ are also spatially aligned with the cropped occupancy $O$.
After each prediction, we update the global occupancy map by simply stitching back the locally predicted occupancy. This local prediction approach allows our model to concentrate inference power to motion-rich areas. Special care is required to select an appropriate window size $H \times W$, as a window that is too small might not capture all ship-ice and ice-ice collisions, resulting in obstacles being pushed out of its boundary and hence inconsistency in global occupancy map updates. 

The neural network model input $X \in \mathbb{R}^{3 \times H \times W}$ is a 3-channel image where one channel is a local occupancy map observation $O \subseteq O_{\text{global}}$, one channel is the footprint $\mathcal{F}(\eta)$ and one channel is the swath $\mathcal{T}(\pi)$. Note that the footprint and swath channels are binary channels as footprint and swath are encoded as binary images. However, the occupancy channel is a gray-scale channel as each occupancy grid $O[x, y]$ is a value from $[0, 1]$.

Our model architecture is based on the U-Net~\cite{UNet_original}. In particular, our model has an encoder stack consisting of convolutional downsampling layers and a decoder stack with transposed convolutional upsampling layers and skip connections. Inspired by~\cite{implicitO}, residual convolution is used for the bottleneck to reduce model complexity. The model outputs a single channel of the predicted occupancy map with the same spatial dimension as the input occupancy map. 

\subsection{Model Training}
\subsubsection{Training Data}
We obtain our training data from the open-sourced ship-ice navigation simulator ({\footnotesize\url{https://github.com/rdesc/AUTO-IceNav}}) from~\cite{rod2023ICRA} in a self-supervised manner. The training set covers 20\%, 30\%, 40\%, and 50\% ice concentration environments and for each concentration, a total of 2,000,000 entries are collected. Each training entry is a tuple $(O, \mathcal{F}(\eta), \mathcal{T}(\pi), O')$ where the initial occupancy observation $O$, footprint $\mathcal{F}(\eta)$ and swath $\mathcal{T}(\pi)$ are the model input, and $O'$ is the ground truth resulting occupancy after the ship executing the motion primitive $\pi$ at pose $\eta$. To obtain a diverse set of ship-ice and ice-ice interaction samples, we deploy a random navigation policy for the ship during the data collection process. At each step, the ship randomly samples a motion primitive $\pi \in \mathcal{P}$ to execute. 

\subsubsection{Loss Functions}
We define the occupancy loss as a Huber loss between the predicted occupancy $\hat{O}'$ and the ground truth $O'$. 

To guide the learning process toward more physically-plausible predictions, we present a novel physics-based loss tailored to our occupancy formulation. The core assumption is that the total mass of the ice floes is conserved before and after the collisions from the ship executing a maneuver. 
Let $m_{p}$ represent the mass of an ice floe $p$, and 
let $W_{obs}^{i} = \{p^i_1, \ldots p^i_m\}$ and $W_{obs}^{f} = \{p^f_1, \ldots p^f_n\}$ be the sets of obstacle ice floes before and after ship executing a motion primitive, respectively. By conservation of mass, we have
\begin{equation}
\label{eq:conserve_mass}
    \sum_{p^i \in W_{obs}^{i}}m_{p^i} = \sum_{p^f \in W_{obs}^{f}}m_{p^f}.
\end{equation}
Similar to~\cite{rod2023ICRA}, we assume that the ice floes are 3D polygons with uniform density $\rho$ and thickness $d$. It follows that the mass of an ice floe is $m_p = \sigma_p \rho d$, where $\sigma_{p}$ denotes the surface area of an ice floe $p$. We rewrite Eq.~\ref{eq:conserve_mass} as
\begin{equation}
\label{eq:surface_area}
\begin{split}
   \sum_{p^i \in W_{obs}^{i}}\sigma_{p^i} \rho d = \sum_{p^f \in W_{obs}^{f}}\sigma_{p^f} \rho d \\
   \sum_{p^i \in W_{obs}^{i}}\sigma_{p^i} = \sum_{p^f \in W_{obs}^{f}}\sigma_{p^f}.
\end{split}
\end{equation}

Recall that each grid in our occupancy map is a ratio between the obstacle-occupied area and the grid area. 
Let $s_g$ denote the grid size of the occupancy map, and let $o_{p}$ be the total occupancy value the ice floe $p$ contributes to the occupancy map, it follows that $\sigma_{p} = s_g o_{p}$. Further, note that the sum of all ice floes' occupancy $o_{p}$ is simply the sum of occupancy values in the occupancy map, namely $\sum_{p^i \in W_{obs}^i}o_{p^i} = \textup{Sum}(O)$ and $\sum_{p^f \in W_{obs}^f}o_{p^f} = \textup{Sum}(O')$.
With these observations, Eq.~\ref{eq:conserve_mass} can be rewritten as 
\begin{equation}
\label{eq:occ_area}
\begin{split}
    \sum_{p^i \in W_{obs}^{i}}o_{p^i} s_g = \sum_{p^f \in W_{obs}^{f}}o_{p^f} s_g \\
    \sum_{p^i \in W_{obs}^{i}}o_{p^i} = \sum_{p^f \in W_{obs}^{f}}o_{p^f} = \\
    \textup{Sum}(O) = \textup{Sum}(O').
\end{split}
\end{equation}
In other words, the sum of all cell values from the occupancy map before and after executing a motion primitive is conserved. Hence, given a training set entry $(O, \mathcal{F}(\eta), \mathcal{T}(\pi), O')$ and a prediction $\hat{O}'$, we define a conservation loss as $\text{MSE}(\textup{Sum}(O), \textup{Sum}(\hat{O}'))$. The final loss is constructed as
\begin{equation}
\label{eq:training_loss_final}
    \mathcal{L} = \text{Huber}(O', \hat{O}') + \lambda \textup{MSE}(\textup{Sum}(O), \textup{Sum}(\hat{O}')),
\end{equation}
which is a linear combination of the occupancy loss and conservation loss scaled by $\lambda$. In practice, the inclusion of the conservation loss prevents the neural network predictor from expanding or shrinking the obstacles in the occupancy map predictions, preserving obstacle shapes and sizes.

\section{Path Planning Framework}
\label{sec:proposed_planning}

We integrate a lattice-based planner with the trained neural network predictor via a simple yet empirically effective cost function based on occupancy map difference. 
We reduce the number of network inferences by caching intermediate prediction results. 

\subsection{Lattice Planning with Motion Primitives}
We adopt the lattice-based planner from~\cite{rod2023ICRA}. The robot state space is discretized into a state lattice and a pre-computed set of motion primitives, called a control set $\mathcal{P}$, is used to repeatedly sample feasible motions. Each motion primitive $\pi \in \mathcal{P}$ is a path that connects two lattice states. 
The total planned path $\Pi = \{\pi_1, \pi_2, \ldots, \pi_n\}$ is then a concatenated sequence of motion primitives $\pi_i$ where $i = 1, \ldots, n$. 
Finally, the planning problem is reduced to searching for a sequence of motion primitives that minimizes the cost function at Eq.~\ref{eq:path_cost_generic} from the start state to the goal line.

\subsection{Collision Cost with Occupancy}
\label{sec:occ_diff_cost}
To leverage ice motion prediction, our key observation is that the change in ice floe occupancy map is strongly correlated with our physical quantities of interests, such as the impulse and kinetic energy loss from ship-ice collisions. We define an occupancy difference function $\textup{diff}: \mathcal{O} \times \mathcal{O} \rightarrow \mathbb{R}$ to compute a scalar difference between two occupancy maps. Let $O$ and $O'$ be the ice floe occupancy maps before and after the ship executes a motion primitive, respectively. 

We begin by presenting our empirical analysis on how the occupancy change, denoted as $\textup{diff}(O, O')$, that results from a ship motion primitive relates to the ship's kinetic energy loss, collision impulse and approximated work done ($W_{\text{approx}}$ described in Sec.~\ref{sec:simulation-setup}) during the execution of the motion primitive. For this analysis, we use occupancy maps that match the input/output dimensions of our learned predictor.

We examined three different methods to measure occupancy differences: mean-squared error (MSE), negated structural similarity (-SSIM), and earth-mover distance (EMD).
Experiments of ship-ice collisions are generated from 100 trials of random ship navigations in each of the 20\%, 30\%, 40\%, and 50\% concentration environments. These experiments provide collision entries that contain the occupancy changes measure by the three difference methods and the physical quantities of interests after the ship executing a motion primitive. 
The correlations between the occupancy changes and the collision metrics are presented in Table~\ref{table:occ_correlation}. As shown, occupancy changes from all difference-measurement methods are highly correlated with the kinetic energy loss, impulse and approximated work. 
\begin{table}
\centering
\begin{tabular}{lccc}
    \toprule
     & KE Loss & Impulse & $W_{\text{approx}}$ \\
     \midrule
    \textit{MSE} & 0.77 & 0.95 & 0.96 \\
    \textit{-SSIM} & 0.70 & 0.88 & 0.90  \\
    \textit{EMD} & 0.66 & 0.88 & 0.89 \\
\bottomrule
\end{tabular}
\caption{Occupancy change correlations}
\label{table:occ_correlation}
\vspace{-20pt}
\end{table}
This strong correlation is intuitive because the occupancy difference correlates with ice field changes. A larger ice field change indicates a greater displacement of ice floes by the ship maneuvers, which results in heavier ship-ice collisions and greater ship navigation efforts. 

Given this observation, we formulate our collision cost based on occupancy difference, assuming that minimizing occupancy changes minimizes collisions. Specifically, let $\pi \in \mathcal{P}$ be a motion primitive, the collision cost $c_{\pi}$ of taking this motion primitive is computed as 
\begin{equation}
\label{eq:collision_cost}
    c_{\pi} = \textup{diff}(O, O'),
\end{equation}
where $O$ and $O'$ are the ice floe occupancy maps before and after the ship executes the motion primitive $\pi$, respectively. For the rest of this paper, we adopt MSE as the method for computing occupancy difference, given its highest empirical correlation with our collision metrics and its computational efficiency. Fig.~\ref{fig:occ_diff_toy} shows an illustrative example of this cost. 

\setlength{\belowcaptionskip}{-10pt}
\begin{figure}[tbp]
    \centering
    \includegraphics[width=\linewidth]{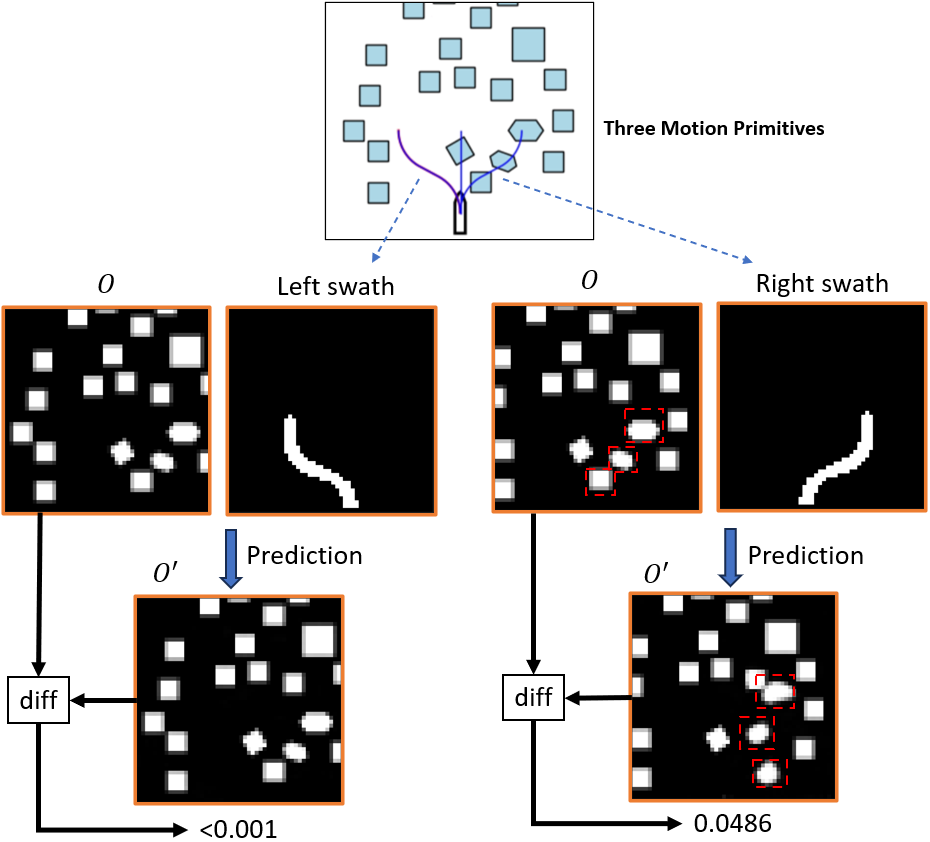}
    \caption{A toy example showing occupancy differences. The motion primitives are simplified for illustration purposes. The red dotted boxes highlight the predicted motions of the obstacles. Here left swath is optimal.}
    \label{fig:occ_diff_toy}
\end{figure}
\setlength{\belowcaptionskip}{-10ex}

Finally, given the collision cost from Eq.~\ref{eq:collision_cost} and a concatenated total path $\Pi = \{\pi_1, \pi_2, \ldots, \pi_n\}$ from a sequence of motion primitives $\pi_i$ where $i = 1, \ldots,n$, we define the collision cost function $C$ in Eq.~\ref{eq:path_cost_generic} for the entire path as 
\begin{equation}
\label{eq:path_cost_final}
    C(\Pi) = \sum_{\pi \in \Pi} \text{diff}(O_{\pi}, O'_{\pi}),
\end{equation}
where $O_{\pi}$ and $O'_{\pi}$ is the ice floe occupancy before and after the ship executing motion primitive $\pi$, respectively.

\subsection{A* Search with Occupancy Estimation}
\label{sec:a_star_search}

Presented in Alg.~\ref{alg:a_star_occ_diff}, our lattice-based planner is based on the A* search algorithm~\cite{a_star} and integrates the predictive component to be obstacle-motion-aware during planning. The algorithm takes as input the current $2D$ pose of the ship as the start node $n_s = \langle x_s, y_s, \theta_s \rangle$, the current global occupancy observation $O^{\textup{global}}_s$, and a goal line $y_{\text{goal}}$. 

During each iteration, the current node $n$ is expanded to its neighbor $n'$ via a feasible motion primitive $\pi$, denoted by the overloaded $+$ operator at Line 11.
A local occupancy $O$ is cropped from the global occupancy $O^{\textup{global}}$ based on the ship pose at node $n$ (Line 13) and is then used for occupancy prediction (Line 14).
The edge cost of $\pi$ is then computed as a linear combination of the occupancy-difference collision cost discussed in~\ref{sec:occ_diff_cost} scaled by $\alpha$ and the motion primitive length $d(\pi)$ (Line 15). The predicted local occupancy $O'$ is stitched back for an updated global occupancy map $O'^{\textup{global}}$ (Line 16), which is then cached if the neighbor $n'$ is visited for the first time or revisited with a better path (Lines 22). In a future iteration when the node $n'$ is being expanded, the global occupancy at $n'$ predicted previously is retrieved for further predictions (Line 9). 
Additionally, by leveraging a GPU in practice, the network inference can be further optimized by performing batched predictions by constructing the inferences from node $n$ to all its neighbors $n'$ as a batch.

\SetKwComment{Comment}{// }{}
\newcommand{\mycommfont}[1]{\small{#1}}
\SetCommentSty{mycommfont}
\SetKw{Continue}{continue}

\begin{algorithm}[htbp]
    \small
    \DontPrintSemicolon
    \caption{A* Search with Occupancy Prediction} 
    \label{alg:a_star_occ_diff}
    \KwIn{$n_s$, $O^{\text{global}}_s$, $y_{\text{goal}}$}
    
    $\text{OPEN} \leftarrow \text{min priority queue containing } n_s$\;
    $\text{CLOSED} \leftarrow \text{empty set}$\;
    $g \leftarrow \text{dict that stores cost-so-far}$\;
    $\textup{occMemo}[n_s] \leftarrow O^{global}_s$   \Comment*[r]{cache first global occupancy}
    \While{\textup{OPEN} \textup{not empty}}{
        $n \leftarrow \text{OPEN.extractMin()}$\;
        \lIf{$n$ \textup{reaches} $y_{\text{goal}}$} {
            \textup{reconstruct and \Return path}
        }
        $\text{CLOSED.add(n)}$\;
        $O^{\text{global}} \leftarrow \text{occMemo[n]}$  \Comment*[r]{retrieve cached prediction}
        \For{all $\pi$ in $\mathcal{P}$} {
            $n' \leftarrow n + \pi$ \Comment*[r]{get neighbor}

            \lIf{$n' \in $ \textup{CLOSED}} {
                \Continue
            }
            $O \leftarrow \textup{crop}(O^{\text{global}}, n)$      \Comment*[r]{crop local occupancy}
            $O' \leftarrow f(O, \mathcal{F}(n), \mathcal{T}(\pi))$  \Comment*[r]{prediction, Sec.~\ref{sec:proposed_occ_prediction}}
            
            $\text{cost} \leftarrow g(n) + \alpha \cdot \textup{diff}(O, O') + d(\pi)$\;

            $O'^{\textup{global}} \leftarrow \textup{stitch}(O^{\textup{global}}, O')$  \Comment*[r]{stitch prediction}

            \If{$n' \notin $ \textup{OPEN} \textup{\textbf{or}} $\textup{cost} < g(n')$}{                          
                $g(n') \leftarrow \text{cost}$\;
                $F \leftarrow g(n') + h(n')$     \Comment*[r]{$h$ heuristic from~\cite{rod2023ICRA}}
                $\text{OPEN.enq}(n', F)$        \Comment*[r]{enqueue or update $n'$}
                \text{Update $n'$ parent pointer}\;
                $\text{occMemo}[n'] \leftarrow O'^{\textup{global}}$  \Comment*[r]{cache prediction}
            }
        }
    }
    \Return False
\end{algorithm}

\subsection{Algorithm Analysis}
Here we analyze Alg.~\ref{alg:a_star_occ_diff} runtime and performance. 

\subsubsection{Runtime}
 While Alg.~\ref{alg:a_star_occ_diff} has the same structure as $A^*$ or Dijkstra's algorithm, given that the predictor $f$ is a deep neural network, the bottleneck operation is the prediction in Line 14, rather than the priority queue operation in typical cases. Note that in Alg.~\ref{alg:a_star_occ_diff}, each node is  expanded at most once and is then added to the closed set $\textup{CLOSED}$ (Line 8). This implies that Alg.~\ref{alg:a_star_occ_diff} runtime scales linearly with the number of edges, or motion primitives (Line 10), explored during the search. Let $|E|$ be the number of edges in the expanded search graph from Alg.~\ref{alg:a_star_occ_diff}, it follows that the runtime is $O(|E|P)$, where $P$ is the time for each forward pass of the predictor.

\subsection{Performance}
We note that by considering obstacle motions, the path finding problem lacks optimal substructures. Namely, in an arbitrary optimal path $n_s \leadsto n \leadsto n_{\textup{goal}}$ from start $n_s$ to the goal $n_{\textup{goal}}$ that passes through node $n$, the sub-path $n_s \leadsto n$ is not necessarily the optimal path from $n_s$ to $n$. This could be the cases where, by following the sub-optimal path $n_s \leadsto n$ to $n$, the ship pushes the ice floes into a desirable configuration, resulting in the entire path $n_s \leadsto n \leadsto n_{\textup{goal}}$ being optimal.
Further, optimally solving the navigation among movable obstacle problem is generally considered as NP-hard~\cite{namo_np_hard}. Consequently, Alg.~\ref{alg:a_star_occ_diff} lacks theoretical optimality guarantee.
Hence, we present the following constant-factor guarantee to upper-bound the performance of Alg.~\ref{alg:a_star_occ_diff}.
\begin{theorem}
\label{theorem:worst-path}
Given a problem instance $I = (n_s, O_s, y_{\text{goal}})$, let $\textup{OPT}(I)$ denote the optimal path and let $\textup{Alg}(I)$ denote the path returned by Alg.~\ref{alg:a_star_occ_diff} assuming perfect occupancy predictions. Let $l_{\min}$ be the distance of the shortest action primitive, the cost of $\textup{Alg}(I)$ is upper-bounded as 
\begin{align}
    u(\textup{Alg}(I)) \leq (1 + \frac{\alpha}{l_{\min}}) \cdot u(\textup{OPT}(I))
\label{eq:worst-case-bound-result}
\end{align}
where $u$ and $\alpha$ are described in Eq.~\ref{eq:path_cost_generic}. 
\end{theorem}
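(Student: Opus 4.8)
The plan is to separate the two ingredients of the cost in Eq.~\ref{eq:path_cost_generic} and to exploit a structural asymmetry: although the full cost $u$ lacks optimal substructure, its length component $d$ is path-independent and additive, whereas the collision component is cheap on a per-primitive basis. First I would bound the collision cost of \emph{any} path by its length, and then bound the length of the path returned by Alg.~\ref{alg:a_star_occ_diff} against the optimal cost.

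\emph{Step 1 (per-primitive collision bound).} Since every occupancy cell lies in $[0,1]$ and $\textup{diff}$ is the MSE between two such maps, each squared cell difference is at most $1$, so $0 \le \textup{diff}(O_\pi, O'_\pi) \le 1$ for every motion primitive $\pi$. \emph{Step 2 (collision bounded by length).} For any path $\Pi$ with $n$ primitives, each of length at least $l_{\min}$, we have $d(\Pi) \ge n\, l_{\min}$, hence $n \le d(\Pi)/l_{\min}$; combining with Step 1,
\[
C(\Pi) = \sum_{\pi \in \Pi}\textup{diff}(O_\pi,O'_\pi) \le n \le \frac{d(\Pi)}{l_{\min}},
\]
so that $u(\Pi) \le \bigl(1+\tfrac{\alpha}{l_{\min}}\bigr)\,d(\Pi)$, while trivially $u(\Pi) \ge d(\Pi)$. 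Applying the upper bound to $\textup{Alg}(I)$ and the lower bound to $\textup{OPT}(I)$ collapses the whole theorem to the single inequality $d(\textup{Alg}(I)) \le u(\textup{OPT}(I))$, after which chaining gives $u(\textup{Alg}(I)) \le \bigl(1+\tfrac{\alpha}{l_{\min}}\bigr)\,d(\textup{Alg}(I)) \le \bigl(1+\tfrac{\alpha}{l_{\min}}\bigr)\,u(\textup{OPT}(I))$.

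\emph{Step 3 (length of the returned path).} For the remaining inequality I would use that the heuristic $h$ from~\cite{rod2023ICRA} lower-bounds the remaining \emph{path length}, and since $u \ge d$ and $\alpha\,\textup{diff}\ge 0$ it is admissible for the full edge cost as well. When the goal is extracted, the priority-queue invariant yields $u(\textup{Alg}(I)) \le g(n)+h(n)$ for the shallowest frontier node $n$ lying on $\textup{OPT}(I)$. The idea is to drive the argument entirely through the length relaxation, where edge weights are the path-independent $d(\pi)$ and $h$ is genuinely admissible and consistent, so that the accumulated length along any explored prefix plus $h$ is always a valid underestimate of a goal-reaching length; this is the quantity I would track to certify that the extracted path cannot exceed the optimal total cost in pure length.

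\emph{Main obstacle.} The delicate point is exactly Step 3. Because the predicted occupancy at a pose depends on the path used to reach it, caching a single occupancy per pose in Alg.~\ref{alg:a_star_occ_diff} destroys optimal substructure: the usual $A^*$ identity $g(n)=g^{*}(n)$ on expansion fails, and in particular one cannot bound $u(\textup{Alg}(I))$ by $u(\textup{OPT}(I))$ directly (indeed the theorem is deliberately weaker than optimality). My strategy is therefore to refuse to reason about the path-dependent collision term inside the search and to control only the length component, which stays additive and substructure-respecting and is compatible with $h$; the entire suboptimality is then reinjected in a controlled manner through the per-primitive collision bound of Steps~1--2. This is precisely the mechanism that converts a clean length estimate back into a bound on $u$ and produces the constant factor $1+\alpha/l_{\min}$. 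I expect the technical heart of the writeup to be making the Step~3 length certification rigorous despite the merged-state caching, and I would isolate it as a lemma before assembling the final chain.
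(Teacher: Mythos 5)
Your Steps 1--2 are sound and reproduce the paper's Observation~\ref{obs:edge-weights}, but your reduction parks the per-edge slack on the wrong path: the inequality you reduce everything to, $d(\textup{Alg}(I)) \leq u(\textup{OPT}(I))$, is not merely hard to prove under the caching scheme --- it is false in general, so Step 3 cannot be repaired. The reason is exactly the path-dependence you flag: the priority queue is ordered by the full cost $u$, whose collision terms are evaluated on cached, history-dependent occupancies, so a node on $\textup{OPT}(I)$ can be closed via a prefix that is slightly cheaper in $u$ but shoves ice into the corridor that $\textup{OPT}(I)$ would traverse cleanly. Concretely: let a node $n$ be reachable by a clean prefix of length $5$ (no collisions) or by a prefix of length $4.9$ with negligible collision cost that blocks the corridor ahead; the search closes $n$ via the latter. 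The true optimum (clean prefix, then a clean corridor of length $5$) has $u(\textup{OPT}(I)) = 10$, but the algorithm now prices the corridor from $n$ at $\textup{diff}\approx 1$ per edge, so with $\alpha = 10$ and five unit-length corridor edges the through-$n$ route costs $\approx 59.9$, and the algorithm returns a collision-free detour of length $12$ instead. Then $d(\textup{Alg}(I)) = 12 > 10 = u(\textup{OPT}(I))$, violating your Step 3, while the theorem's bound $(1+\alpha/l_{\min})\cdot u(\textup{OPT}(I)) = 110$ holds comfortably. Your proposed mechanism for Step 3 --- certifying only accumulated length, using that $h$ is admissible for length --- cannot close this gap, because the algorithm's greedy commitments are made with respect to $u$, not $d$; no invariant on lengths controls which path the search actually returns.

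The fix is to place the per-edge slack on $\textup{OPT}(I)$ rather than on $\textup{Alg}(I)$, which is what the paper does. Since $u(e) \leq d(e) + \alpha$ holds for every edge \emph{regardless of which occupancy is cached at its tail}, an induction along $\textup{OPT}(I) = n_0 \to \cdots \to n_k$ (the paper's Lemma~\ref{lemma:upper-bound}) yields $g(n_i) \leq d(e_1)+\cdots+d(e_i) + \alpha i$, hence $u(\textup{Alg}(I)) \leq d(\textup{OPT}(I)) + \alpha k$; the factor then follows from $k \leq d(\textup{OPT}(I))/l_{\min}$ and $d(\textup{OPT}(I)) \leq u(\textup{OPT}(I))$. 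Note the asymmetry with your plan: the paper never compares $\textup{Alg}(I)$'s length to anything. It bounds $u(\textup{Alg}(I))$ directly by exhibiting one candidate node sequence (the optimal one, costed pessimistically at $d(e)+\alpha$ per edge) that the search provably does no worse than; your Step 1 bound is the right ingredient, but it must be spent on the optimal path's edges inside the search argument, not on the returned path after the fact.
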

We present the proof of Theorem.~\ref{theorem:worst-path} in the next section. We acknowledge that Theorem.~\ref{theorem:worst-path} may not hold under imperfect predictions, and we leave the incorporation of prediction error uncertainty to future work.

\section{Algorithm Performance Bound}
\label{sec:performance_bound}
Recall that given an edge $e = (n_1, n_2)$ that connects two nodes $n_1$ and $n_2$, the cost of the edge $u(e)$ is computed as 
\begin{equation}
\label{eq:edge-cost}
    u(e) = d(e) + \alpha \cdot \textup{MSE}(O_1, O_2),
\end{equation}
where $O_1$ and $O_2$ are the occupancy maps at nodes $n_1$ and $n_2$, respectively (Alg.~\ref{alg:a_star_occ_diff}, Line 15). This is a linear combination of the edge length $d(e)$ and the occupancy difference from taking the edge. For simplicity we use $u(\cdot)$ to denote both the cost of a path and an edge. By noting that each cell value of an occupancy map $O$ is a value from $[0, 1]$, we present the following observation.

\begin{observation}[Maximal and Minimal Edge Weights]
For each edge $e$, the edge cost is bounded as $d(e) \leq u(e) \leq d(e) + \alpha$.
\label{obs:edge-weights} 
\end{observation}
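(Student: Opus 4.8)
The plan is to expand the definition of the mean-squared error term in Eq.~\ref{eq:edge-cost} and bound it cell-by-cell using the fact that every occupancy value lies in $[0,1]$. Writing the two maps over their common $H \times W$ grid, the mean-squared error is the average
\begin{equation*}
\textup{MSE}(O_1, O_2) = \frac{1}{HW}\sum_{x,y}\bigl(O_1[x,y] - O_2[x,y]\bigr)^2,
\end{equation*}
where the sum runs over all $HW$ cells. The key step is to observe that, since $O_1[x,y], O_2[x,y] \in [0,1]$, each difference $O_1[x,y] - O_2[x,y]$ lies in $[-1,1]$, and hence each squared term lies in $[0,1]$.

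Because the average of a finite collection of numbers, each lying in $[0,1]$, again lies in $[0,1]$, I would conclude $0 \leq \textup{MSE}(O_1, O_2) \leq 1$. The scaling parameter satisfies $\alpha \geq 0$ (Eq.~\ref{eq:path_cost_generic}), so multiplying the chain of inequalities by $\alpha$ preserves its direction and gives $0 \leq \alpha \cdot \textup{MSE}(O_1, O_2) \leq \alpha$. Adding the edge length $d(e)$ to each side and substituting into Eq.~\ref{eq:edge-cost} then yields $d(e) \leq u(e) \leq d(e) + \alpha$, which is exactly the claimed bound.

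This argument is essentially a one-line sandwiching estimate, so I do not expect a genuine obstacle. The only point requiring care is the normalization of the error measure: the upper bound of $1$ on $\textup{MSE}$ relies on it being a \emph{mean} of the squared cell differences rather than a raw sum, since a sum over $HW$ cells could be as large as $HW$. As long as the $\textup{diff}$ function used in the algorithm is the normalized MSE (consistent with the per-cell values living in $[0,1]$), the bounds hold verbatim; I would state this normalization explicitly at the outset to keep the estimate self-contained.
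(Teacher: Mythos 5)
Your proof is correct and follows exactly the paper's (implicit) argument: the paper justifies the observation solely by noting that occupancy cell values lie in $[0,1]$, which is precisely the cell-by-cell bound you spell out before averaging, scaling by $\alpha \geq 0$, and adding $d(e)$. Your added remark about the normalization (mean versus raw sum of squared differences) is a sensible point of care but does not depart from the paper's reasoning, since the paper's $\textup{diff}$ is indeed the MSE.
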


Let $I = (n_s, O_s, y_{\text{goal}})$ denote an arbitrary problem instance and let $\textup{OPT}(I) = n_0 \rightarrow n_1 \rightarrow \ldots \rightarrow n_{k-1} \rightarrow n_{k}$ be the optimal path with $k \geq 1$ edges, where $n_0 = n_s$ and $n_k = n_{\textup{goal}}$. We let $e_i$ with $i \in \{1, \ldots, k\}$ denote the directed edge on the optimal path from node $n_{i-1}$ to node $n_i$. 

Recall that in our problem setup the goal is defined to be a line $y_{\textup{goal}}$. So here $n_k$ is an arbitrary goal node $n_{\textup{goal}}$ whose y-value is greater than $y_{\textup{goal}}$, and the multiplicity of the goal nodes does not affect our proof. We now prove the following Lemma.

\begin{lemma}
\label{lemma:upper-bound}
For all nodes $n_i$ with $i = \{1, \ldots, k\}$ on the optimal path, after running Alg.~\ref{alg:a_star_occ_diff}, the cost-so-far for $n_i$ found by Alg.~\ref{alg:a_star_occ_diff}, denoted as $g(n_i)$ (Alg.~\ref{alg:a_star_occ_diff} Line 18), is upper-bounded by 

\begin{equation}
\label{eq:alg-upper-bound-0}
\begin{split}
    & g(n_i) \leq d(e_1) + \ldots + d(e_i) + \alpha \cdot i.
\end{split}
\end{equation}
\end{lemma}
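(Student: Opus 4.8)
The plan is to argue by induction on $i$, exploiting the fact that Observation~\ref{obs:edge-weights} lets us bound the occupancy-difference contribution of every edge by $\alpha$ \emph{without} knowing the exact occupancy maps involved. This is the crucial idea: because the edge costs are path-dependent (there is no optimal substructure), the occupancy the algorithm caches at a node need not match the occupancy that the optimal path would produce there. We sidestep this entirely by never appealing to the exact value of $\textup{MSE}(\cdot,\cdot)$ on the optimal-path edges, but only to its worst-case bound, which is exactly what Observation~\ref{obs:edge-weights} supplies.

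For the base case $i = 1$, note that $n_0 = n_s$ is the unique element of OPEN initially and is therefore the first node extracted (Alg.~\ref{alg:a_star_occ_diff}, Line~6), with $g(n_0) = 0$ and the true initial occupancy $O_s$ cached at it (Line~4). Relaxing the edge $e_1$ to $n_1$ assigns $g(n_1)$ a value at most $g(n_0) + d(e_1) + \alpha \cdot \textup{MSE}(O_s, O_1) = u(e_1) \leq d(e_1) + \alpha$, where the final inequality is Observation~\ref{obs:edge-weights}. Since cost-so-far values only ever decrease in later relaxations (Line~17), the final $g(n_1)$ obeys the claimed bound.

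For the inductive step, assume $g(n_{i-1}) \le d(e_1) + \ldots + d(e_{i-1}) + \alpha(i-1)$ and that $n_{i-1}$ is extracted at some iteration. When $n_{i-1}$ is expanded, the algorithm processes the motion primitive corresponding to $e_i$ and forms the candidate cost $g(n_{i-1}) + d(e_i) + \alpha \cdot \textup{MSE}(O_{i-1}, O_i)$ (Line~15), where $O_{i-1}$ and $O_i$ are the occupancy maps the algorithm actually holds along its own best path to $n_{i-1}$, not those of the optimal path. By Observation~\ref{obs:edge-weights} this candidate is at most $g(n_{i-1}) + d(e_i) + \alpha$, and combining with the inductive hypothesis yields the target bound $d(e_1) + \ldots + d(e_i) + \alpha i$. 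The relaxation test on Line~17 then sets $g(n_i)$ to at most this candidate, and monotonicity of $g$ preserves the bound through termination.

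The main obstacle is the bookkeeping around the CLOSED set and the extraction order rather than the arithmetic of the bound. The clean relaxation above requires that $n_{i-1}$ be extracted while $n_i$ has not yet been moved into CLOSED (Line~12), since a closed node is never relaxed again under the no-reopening policy of Alg.~\ref{alg:a_star_occ_diff}; if $n_i$ were finalized early through some other predecessor, its $g(n_i)$ would not be controlled by the argument above. I would close this gap with the standard A* consistency argument: using the consistent heuristic $h$ from~\cite{rod2023ICRA}, the optimal-path prefixes are expanded before any goal node is returned, so each edge $e_i$ is relaxed from an already-correctly-bounded $n_{i-1}$ before $n_i$ is finalized. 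I expect verifying this extraction ordering to be the delicate part, whereas the bound itself follows immediately once the relaxation is known to fire.
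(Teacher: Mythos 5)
Your proposal follows the same skeleton as the paper's proof: induction along the optimal path $n_0 \rightarrow \ldots \rightarrow n_k$, with Observation~\ref{obs:edge-weights} supplying the per-edge bound $u(e_i) \leq d(e_i) + \alpha$ so that the exact (path-dependent) occupancy maps never need to be known --- your base case and the arithmetic of the inductive step are exactly the paper's. Where you genuinely diverge is in how the inductive step is justified. The paper splits on the \emph{final parent pointer} of $n_i$: either it points to $n_{i-1}$, in which case $g(n_i) = g(n_{i-1}) + u(e_i)$, or it does not, in which case the paper simply asserts that the algorithm ``finds a cheaper path to $n_i$ than following $e_i$'' and concludes $g(n_i) \leq g(n_{i-1}) + u(e_i)$. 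You instead argue via the relaxation that fires when $n_{i-1}$ is expanded, and you explicitly flag that this relaxation is skipped when $n_i$ is already in CLOSED --- the no-reopening issue. That concern is legitimate, and it is precisely what the paper's ``finds a cheaper path'' assertion leaves unjustified: with an adversarially inconsistent heuristic, $n_i$ could be closed early through another predecessor with a $g$-value exceeding the claimed bound and never be re-relaxed, and likewise $g(n_i)$ need not even be defined if a goal node is extracted before $n_{i-1}$ is ever expanded; so your account is, if anything, more careful than the published one. Two caveats on your proposed repair, however: (i) the ``standard A* consistency argument'' cannot be invoked verbatim, because the edge costs here are path-dependent (the cached occupancies differ per path), so consistency of $h$ with respect to $u$ is not even well defined; what does work is that a distance-type heuristic consistent with respect to $d$ alone remains admissible for $u$, since every occupancy cell lies in $[0,1]$ and hence $u(e) \geq d(e)$, and then a frontier-invariant argument (the least-index optimal-path node not yet closed always sits in OPEN with $g(n_j) \leq d(e_1) + \ldots + d(e_j) + \alpha \cdot j$, the $\alpha$-per-edge slack absorbing all path dependence) completes the induction; (ii) your claim that ``the optimal-path prefixes are expanded before any goal node is returned'' is stronger than what is true or needed --- those nodes may never be expanded; the frontier invariant together with the priority-queue comparison at the moment a goal node is extracted is what actually delivers Theorem~\ref{theorem:worst-path}. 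With those adjustments your argument closes a gap that the paper's own proof glosses over.
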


\begin{proof}
We prove Lemma~\ref{lemma:upper-bound} by induction on the node $n_i$ with $i \in \{1, \ldots, k\}$. 

\textbf{Base Case $n_i = n_1$: } There are two cases:

\begin{enumerate}
    \item After the execution of Alg.~\ref{alg:a_star_occ_diff}, $n_1$'s parent pointer points to $n_0$. This implies that $g(n_1) = u(e_1) \leq d(e_1) + \alpha$ by Observation~\ref{obs:edge-weights}.
    \item After the execution of Alg.~\ref{alg:a_star_occ_diff}, $n_1$'s parent pointer does not point to $n_0$. This implies Alg.~\ref{alg:a_star_occ_diff} finds a cheaper path to $n_1$ than following the edge $e_1$. It follows that $g(n_1) \leq u(e_1) \leq d(e_1) + \alpha$. Thus, the base case holds.
\end{enumerate}

\textbf{Inductive Hypothesis: } Lemma~\ref{lemma:upper-bound} holds for all nodes $n \in \{n_1, \ldots, n_{i-1}\}$.

\textbf{Inductive Step: } By Inductive Hypothesis, it follows that after the execution of Alg.~\ref{alg:a_star_occ_diff}, we have
\begin{equation}
\label{eq:i-1-cost}
    g(n_{i-1}) \leq d(e_1) + \ldots + d(e_{i-1}) + \alpha \cdot (i-1).
\end{equation}
We again proceed with the same case analysis. 
\begin{enumerate}
    \item After the execution of Alg.~\ref{alg:a_star_occ_diff}, $n_i$'s parent pointer points to $n_{i-1}$. This implies that $g(n_i) = g(n_{i-1}) + u(e_i)$. By Eq.~\ref{eq:i-1-cost} and Observation~\ref{obs:edge-weights}, we have
    \begin{equation}
    \label{eq:inductive-step-1}
    \begin{split}
        & g(n_i) = g(n_{i-1}) + u(e_i) \\
        & \leq d(e_1) + \ldots + d(e_{i-1}) + \alpha \cdot (i-1) + d(e_i) + \alpha \\
        & = d(e_1) + \ldots + d(e_i) + \alpha \cdot i.
    \end{split}
    \end{equation}
    \item After the execution of Alg.~\ref{alg:a_star_occ_diff}, $n_i$'s parent pointer does not point to $n_{i-1}$. This implies Alg.~\ref{alg:a_star_occ_diff} finds a cheaper path to $n_i$ than following the edge $e_i$ from node $n_{i-1}$. Consequently, $g(n_i) \leq g(n_{i-1}) + u(e_i)$. Applying the same reasoning from Eq.~\ref{eq:inductive-step-1}, we have 
    \begin{equation}
    \label{eq:inductive-step-2}
    \begin{split}
        & g(n_i)\leq d(e_1) + \ldots + d(e_i) + \alpha \cdot i.
    \end{split}
    \end{equation}
\end{enumerate}
Thus, by induction, Lemma~\ref{lemma:upper-bound} holds.
\end{proof}

With Lemma~\ref{lemma:upper-bound} established, note that $n_k = n_{\textup{goal}}$. This implies that after its execution, Alg.~\ref{alg:a_star_occ_diff} is guaranteed to find a path to $n_{\textup{goal}}$ that is cheaper than $d(e_1) + \ldots + d(e_k) + \alpha \cdot k$. Let $\textup{Alg}(I)$ denote the path returned by Alg.~\ref{alg:a_star_occ_diff} on instance $I$, it follows that 
\begin{equation}
\label{eq:alg-upper-bound-1}
\begin{split}
    & u(\textup{Alg(I)}) \leq d(e_1) + \ldots + d(e_k) + \alpha \cdot k \\
    & = d(\textup{OPT}(I)) + \alpha \cdot k.
\end{split}
\end{equation}

Recall that $k$ is the number of edges in the optimal path $\textup{OPT}(I)$, and each edge in Alg.~\ref{alg:a_star_occ_diff} is an action primitive $\pi$. 
In this work, all distances $d(\cdot)$ are measure in meters. Let $l_{\min}$ denotes the distance of the shortest action primitive in meters. 
This implies that $k \leq \frac{d(\textup{OPT}(I))}{l_{\min}}$. By expanding $u(\cdot)$ using its definition from Eq.~\ref{eq:path_cost_generic}, it follows that
\begin{equation}
\label{eq:alg-upper-bound-2}
\begin{split}
    & d(\textup{OPT}(I)) + \alpha \cdot k\\
    & \leq d(\textup{OPT}(I)) + \alpha \cdot k + \alpha \cdot C(\textup{OPT}(I))\\
    & \leq d(\textup{OPT}(I)) + \frac{\alpha}{l_{\min}} \cdot d(\textup{OPT}(I)) + \alpha \cdot C(\textup{OPT}(I))\\
    & = (1 + \frac{\alpha}{l_{\min}}) \cdot d(\textup{OPT}(I)) + \alpha \cdot C(\textup{OPT}(I))\\
    & \leq (1 + \frac{\alpha}{l_{\min}}) \cdot (d(\textup{OPT}(I) + \alpha \cdot C(\textup{OPT}(I)))\\
    & = (1 + \frac{\alpha}{l_{\min}}) \cdot u(\textup{OPT}(I)).
\end{split}
\end{equation}

It follows that Theorem~\ref{theorem:worst-path} holds.
\section{Evaluation}
\label{sec:evaluation}
We evaluate the performance of the proposed planner both in simulation and in a physical dry-land ship-ice navigation testbed. We compare our planner (\emph{predictive}) against three baselines. The first baseline (\emph{straight}) is a naive planner that returns a straight path from the ship's position to the goal. The second baseline (\emph{skeleton}) adopts the shortest open-water planner proposed in~\cite{skeleton_planner}. The third baseline (\emph{lattice}) is the lattice planner proposed in~\cite{rod2023ICRA}. Each trial involves navigating the ship across an ice channel toward a goal line $\mathcal{G}$ using a specified planner. Codes and demos are available at {\footnotesize \url{https://github.com/IvanIZ/predictive-asv-planner}}.

\subsection{Simulation Setup}
\label{sec:simulation-setup}
We conduct evaluations on the autonomous ship-ice navigation simulator from~\cite{rod2023ICRA}. The setup is consistent with the experimental platform at the National Research Council Canada ice tank in St. John’s, NL, featuring a 1:45 scale model vessel. 
Tests are conducted at 20\%, 30\%, 40\%, and 50\% concentrations, with 200 randomly generated environments per concentration. Fig.~\ref{fig:sim_fig_front} (left) shows a 40\% concentration ice field. We evaluate all four planners on the same set of trials, 
giving a total of  \(4 \, \text{concentrations} \, \times 200 \, \text{trials} \, \times 4 \, \text{planners} \, = 3200 \, \text{experiments}\). 

Planners are evaluated with four metrics - travel distance, a running total kinetic energy loss by the ship, impulse due to ship-ice collisions, and an approximate of work done by the ship due to ice pushing. The total kinetic energy loss and impulse are computed from the simulator, and the approximated work $W_{\text{approx}}$ is computed as an aggregation of products of the ice mass $m_{p_i}$ with the arc length $s_{p_i}$ taken by the ice floe $p_i \in W_{obs}$ as $W_{\text{approx}}= \sum_{p_i \in W_{obs}} m_{p_i} s_{p_i}$.
While $W_{\text{approx}}$ does not directly compute work, we found that it correlates with work and provides a direct assessment of the extent of ice field changes due to ship maneuvers.

\subsection{Simulation Results}
The evaluation results from simulation are presented in Fig.~\ref{fig:eval_result_sim}. As shown, the \emph{predictive} planner gives the best performance across all collision metrics in all concentrations.

In general, the \emph{predictive} planner and the \emph{lattice} are the most competitive planners in collision minimization, especially in 20\% and 30\% concentrations. This could potentially be explained by the fact that both \emph{predictive} and \emph{lattice} planners are designed to handle ship-ice navigation scenarios where a collision-free path does not exist. 

Observe that while the collision minimization performances of \emph{predictive} and \emph{lattice} planner are comparably competitive in low ice concentrations (20\% and 30\%), the performance gain from \emph{skeleton} and \emph{lattice} planners over the naive \emph{straight} planner diminishes significantly as concentration increases (40\% and 50\%). Since \emph{skeleton} and \emph{lattice} planner do not account for ice floe motion during planning, the increasingly rich and complex ice motion in higher concentrations renders the planning strategies less effective. However, the \emph{predictive} planner maintains a robust performance across all concentrations. 

Lastly, note that \emph{predictive} planner outperforms \emph{lattice} while maintaining comparable distances. This suggests that the improved collision avoidance of the \emph{predictive} planner is not simply due to compromising distance by taking longer detours, but stems from a more informed reasoning process enabled by the additional knowledge about ice motion.

\begin{figure}[htbp]
    \centering
    \begin{minipage}[b]{0.48\linewidth}
        \centering
        \includegraphics[width=\linewidth]{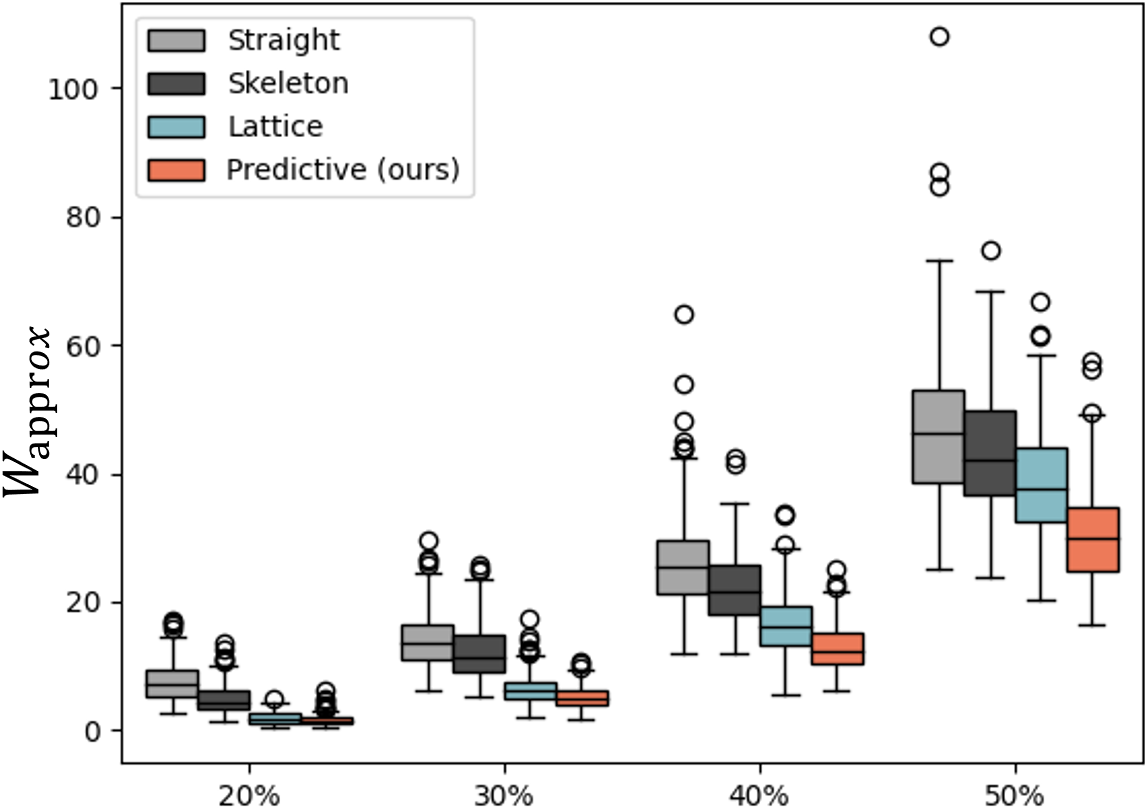}
    \end{minipage}
    \hspace{0.01\linewidth} 
    \begin{minipage}[b]{0.48\linewidth}
        \centering
        \includegraphics[width=\linewidth]{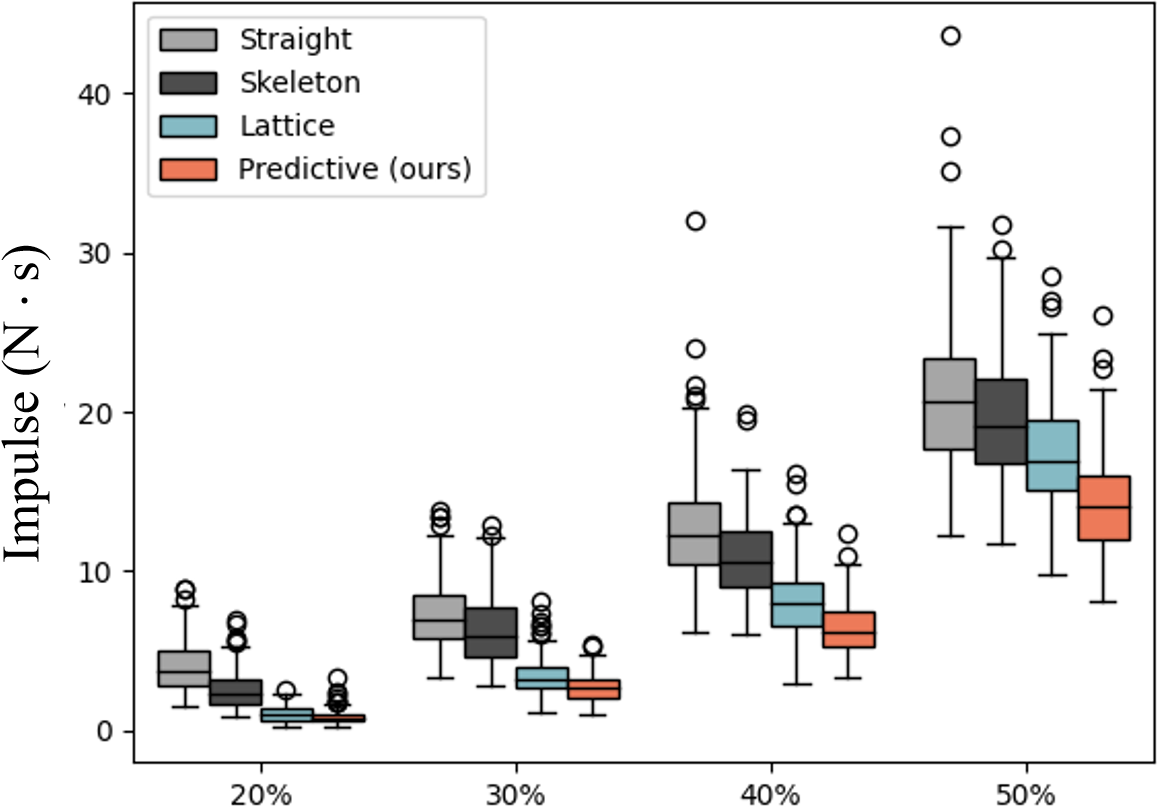}
    \end{minipage}
    
    \vspace{1mm}
    
    \begin{minipage}[b]{0.48\linewidth}
        \centering
        \includegraphics[width=\linewidth]{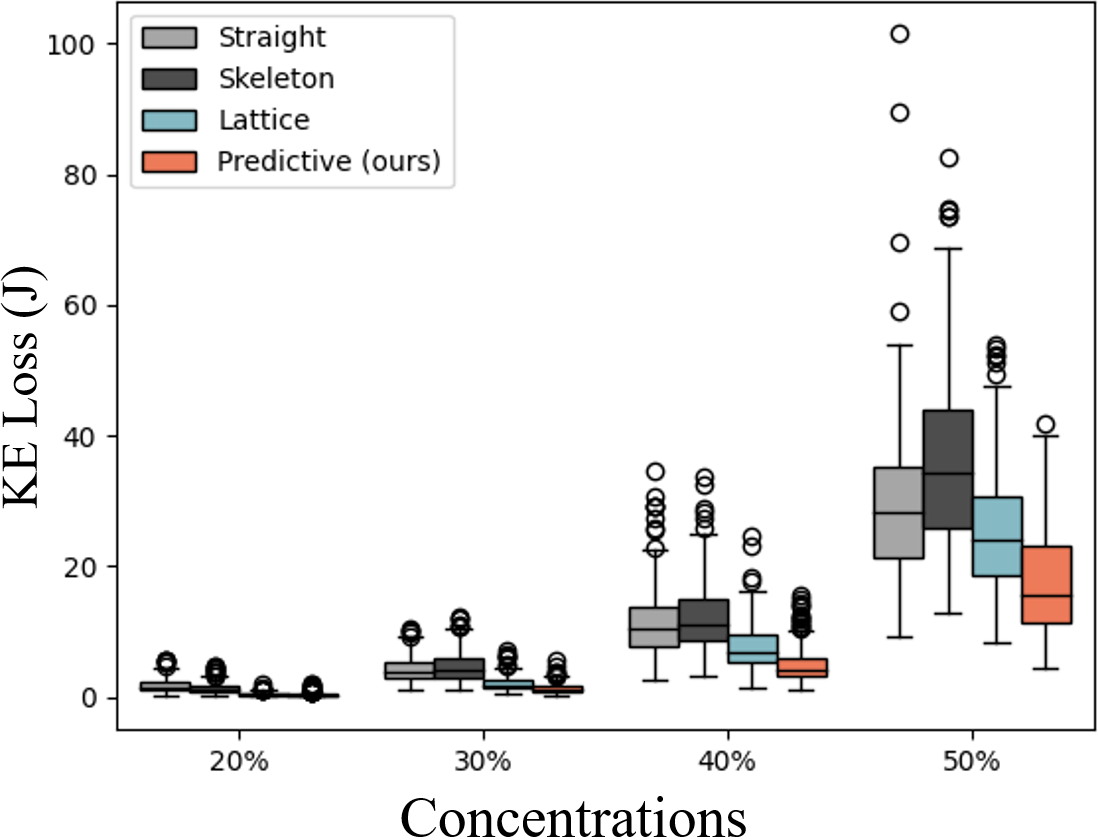}
    \end{minipage}
    \hspace{0.01\linewidth}
    \begin{minipage}[b]{0.48\linewidth}
        \centering
        \includegraphics[width=\linewidth]{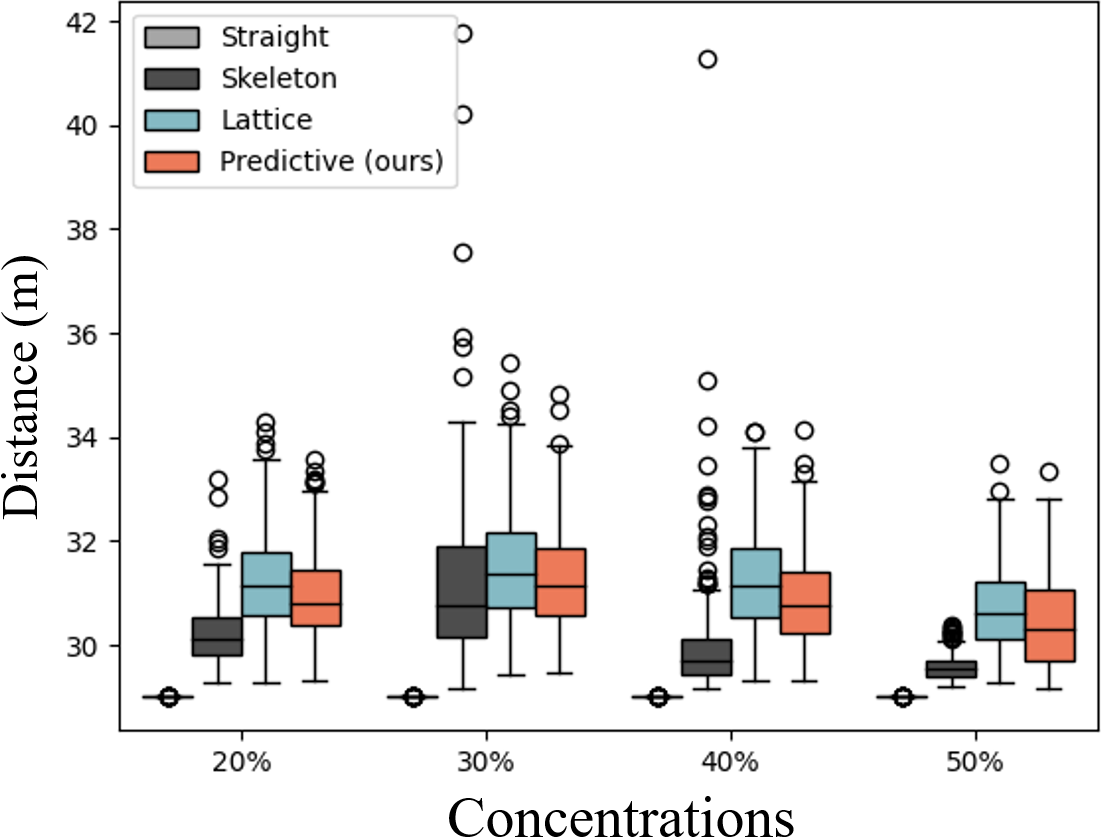}
    \end{minipage}
    \caption{Evaluation results from simulation across four concentrations}
\label{fig:eval_result_sim}
\vspace{-10pt}
\end{figure}

\subsection{Physical Testbed Setup}
We further validate the proposed planner in a physical testbed that simulates autonomous ship-ice navigation at the University of Waterloo Autonomous System Laboratory.  The testbed features a \(2.8 \, \text{m} \, \times 1.6 \, \text{m}\) navigation environment, shown in Fig.~\ref{fig:sim_fig_front} (right). A TurtleBot3 Burger is used as a model vessel, and is attached with V-shape hanger as the ship bow. Ice obstacles are made from randomly shaped foam boards. The vision system is an overhead camera that tracks the robot pose at 10\,Hz and detects obstacles at 3\,Hz. The robot navigates at a constant nominal velocity of \(0.03 \, \text{m/s}\), with a PID controller tracking the reference path at 10\,Hz.

\subsection{Physical Testbed Evaluation}
We perform testbed evaluations in 20\%, 30\%, and 40\% concentration environments. For each concentration, 40 trials are performed for each planner, 
giving a total of \(3 \, \text{concentrations} \, \times 40 \, \text{trials} \, \times 4 \, \text{planners} \, = 480 \, \text{experiments}\). 
$W_{\text{approx}}$ and travel distance are used as metrics to quantify collision minimization and travel efficiency, respectively. 
Demonstrations of the testbed setup and experiments can be found in the supplemental video.

Fig.~\ref{fig:eval_result_testbed} presents the evaluation results from the testbed, showing similar trends. Notably, the \emph{predictive} planner achieves the lowest $W_{\textup{approx}}$ across all three concentrations while maintaining comparable travel distances in the 20\% and 30\% cases. However, a trend of increased travel distance from \emph{predictive} is observed at the 40\% concentration, likely due to greater errors in ice motion prediction.

We acknowledge the lab-to-real gap in our testbed, particularly the absence of factors such as ocean current and fluid dynamics. However, the testbed effectively captures core aspects of autonomous navigation in ice-covered waters, focusing on scenarios where a collision-free path is unattainable and where ship-ice and ice-ice collisions are prominent.

\begin{figure}[htbp]
    \centering
    \begin{minipage}[b]{0.48\linewidth}
        \centering
        \includegraphics[width=\linewidth]{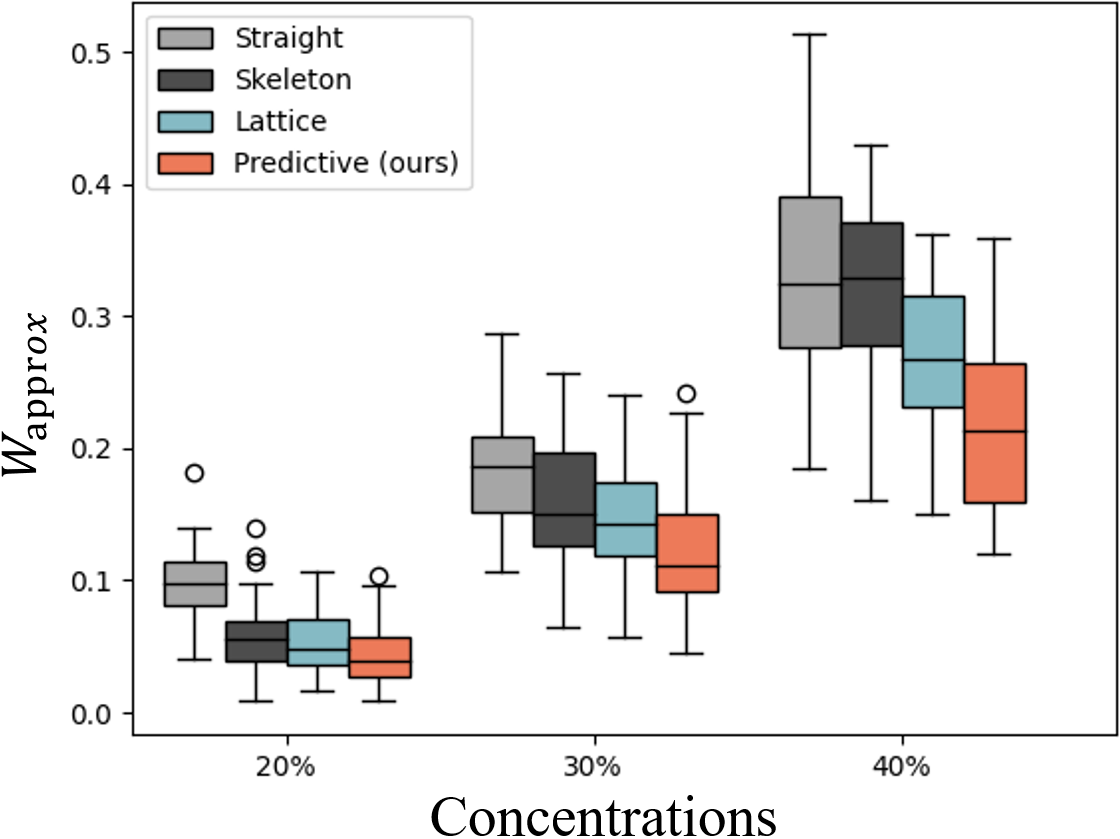}
    \end{minipage}
    \hspace{0.01\linewidth} 
    \begin{minipage}[b]{0.48\linewidth}
        \centering
        \includegraphics[width=\linewidth]{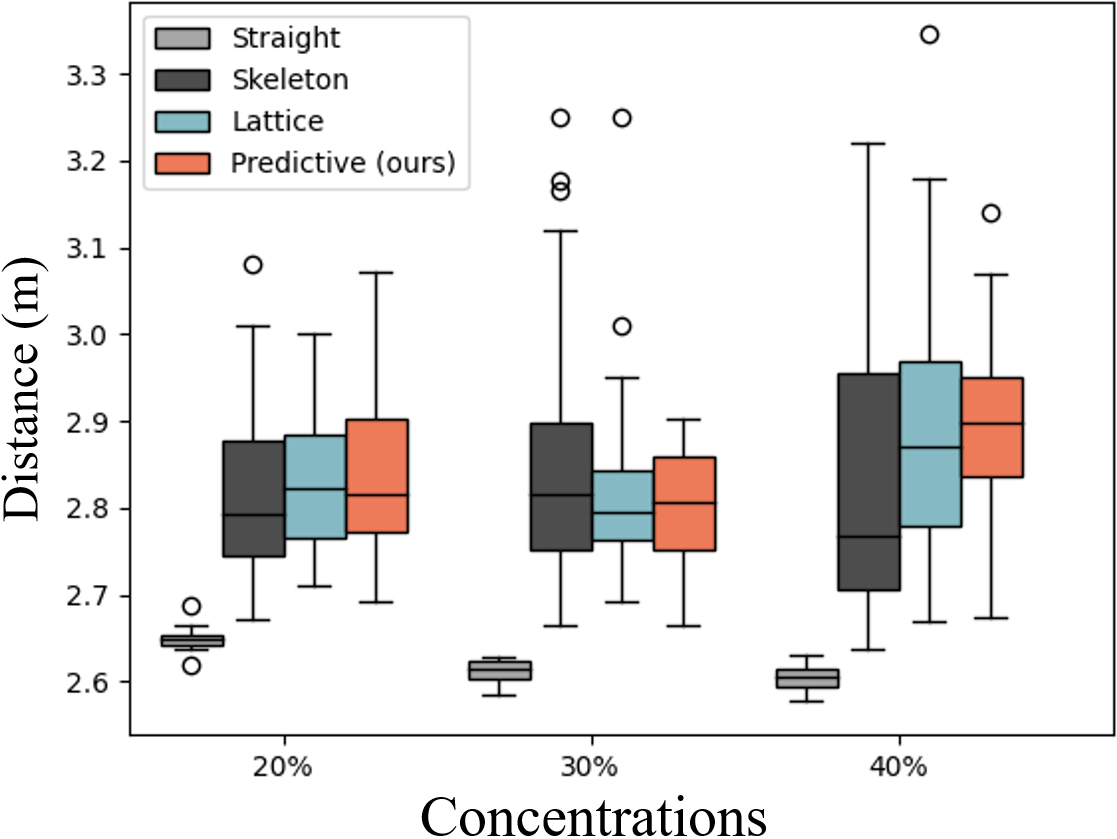}
    \end{minipage}
    \caption{Testbed evaluations across 20\%, 30\%, and 40\% concentrations.}
    \label{fig:eval_result_testbed}
\end{figure}

\section{Conclusion}
\label{sec:conclusion}
We present an integrated framework for ASV navigation in ice-covered waters, combining a deep learning model for ice motion prediction with a lattice planner to plan obstacle-motion-aware paths. Evaluations in simulation and a physical testbed show that predicting the motion of ice significantly improves the navigation performance. The performance gain is especially notable in high ice concentrations with increased ice motion, underscoring the value of anticipating obstacle motions during planning. In future work, we aim to incorporate additional environment factors into our prediction, such ship waves and ocean currents, for more robust inferences.



\end{document}